\newtheorem{theorem}{Theorem}%[section]
\newtheorem{lemma}{Lemma}
\newtheorem{proposition}{Proposition}
\title{Limited Gradient Descent: Learning With Noisy Labels}
\author{
 Yi~Sun \qquad  Yan~Tian \qquad  Yiping~Xu \qquad  Jianxiang~Li\\
 School of Electronic Information and Communications \\
 Huazhong University of Science and Technology\\
 \texttt{yi\_sun@hust.edu.cn, tianyan@hust.edu.cn} \\
 \texttt{xuyiping@hust.edu.cn, jianxiang\_li@hust.edu.cn}
  % examples of more authors
  % \And
  % Coauthor \\
  % Affiliation \\
  % Address \\
  % \texttt{email} \\
  % \AND
  % Coauthor \\
  % Affiliation \\
  % Address \\
  % \texttt{email} \\
  % \And
  % Coauthor \\
  % Affiliation \\
  % Address \\
  % \texttt{email} \\
  % \And
  % Coauthor \\
  % Affiliation \\
  % Address \\
  % \texttt{email} \\
}
\begin{document}
\maketitle

%%%%%%%%% ABSTRACT
\begin{abstract}
Label noise may affect the generalization of classifiers, and the effective learning of main patterns from samples with noisy labels is an important challenge. Recent studies have shown that deep neural networks tend to prioritize the learning of simple patterns over the memorization of noise patterns. This suggests a possible method to search for the best generalization that learns the main pattern until the noise begins to be memorized. Traditional approaches often employ a clean validation set to find the best stop timing of learning, i.e., early stopping. However, the generalization performance of such methods relies on the quality of validation sets. Further, in practice, a clean validation set is sometimes difficult to obtain. To solve this problem, we propose a method that can estimate the optimal stopping timing without a clean validation set, called limited gradient descent. We modified the labels of a few samples in a noisy dataset to obtain false labels and to create a reverse pattern. By monitoring the learning progress of the noisy and reverse samples, we can determine the stop timing of learning. In this paper, we also theoretically provide some necessary conditions on learning with noisy labels. Experimental results on CIFAR-10 and CIFAR-100 datasets demonstrate that our approach has a comparable generalization performance to methods relying on a clean validation set. Thus, on the noisy Clothing-1M dataset, our approach surpasses methods that rely on a clean validation set.
\end{abstract}

%%%%%%%%% BODY TEXT
\section{Introduction}
Noisy labels tend to affect the generalization performance of machine learning. Errors are often inevitable in manual annotation. Moreover, many datasets are constructed by crawling images and labels from websites, and these often contain a considerable number of noisy labels (e.g., YFCC100M~\cite{thomee2015yfcc100m}, Clothing1M~\cite{xiao2015learning}). Therefore, research on learning with noisy labels has great importance.

Deep neural networks (DNNs) have been applied to achieve breakthroughs in many fields. Many DNN-based methods have been proposed for learning with noisy labels. However, owing to the powerful fitting ability, DNNs may even memorize noise~\cite{zhang2016understanding}, which might hamper the generalization of the main pattern (pattern of interest).
However, a recent work~\cite{arpit2017closer} further revealed that DNNs prioritize the learning of simple patterns over the memorization of noise. During training, the generalization performance of the main pattern increases first and then decreases.

Traditional approaches~\cite{xiao2015learning,zhang2017mixup,tanaka2018joint} often employ a clean validation set to identify the best stop timing for learning, i.e., early stopping. However, these methods are almost sensitive to the validation sets. The quality of the validation set directly affects the generalization performance. In fact, it is cumbersome to produce a high-quality validation set, and in practice applications, clean validation sets are sometimes not readily available.

In this work, we focus on the learning of noisy labels without involving clean samples. To estimate the best stop timing of training, we propose a method called limited gradient descent (LGD) based on the characteristic that a classifier learns the main pattern until the noise pattern begins to be memorized. This method hopes to monitor the learning progresses of the main and noise patterns. Unfortunately, samples of different patterns cannot be initially distinguished.

Thus, we randomly select a few samples from a noisy training set as the reverse pattern, which is mutually exclusive to the main pattern. Specifically, we shift the labels of the selected samples as reverse labels (as in $label+1$). It can be proved that the reverse labels are almost false. Note that the samples of the main pattern are still unknown. We can obtain the training accuracies for the two parts of the samples: the reverse samples and leftover noisy samples. At the early stage of training, the accuracy of the leftover samples increases because the main pattern is learned first, and the accuracy of the reverse samples does not increase (or may even decrease). We could therefore monitor the ratio of the two accuracies to predict the best generalization.

We evaluate the performance on the CIFAR-10, CIFAR-100, and Clothing-1M datasets. The experimental results on CIFAR-10 and CIFAR-100 show that although the accuracies with our method are comparable to those from corresponding traditional methods, the variances of the experimental results are significantly reduced, which shows that our method has better robustness. We further demonstrate state-of-the-art performance on the noisy real-world Clothing-1M dataset.

The main contributions of the present study are as follows. First, we propose a weakly supervised method called limited gradient descent (LGD) that can learn the main pattern to the maximum extent possible from noisy labels. Second, we prepare a few samples with false labels for training, which no study has attempted thus far to the best of our knowledge. Third, we theoretically prove some necessary conditions on LGD learning with noisy labels. Lastly, our method is free of models; thus, it can be applied to most DNNs and loss functions based on the stochastic gradient descent (SGD) optimization.

%-------------------------------------------------------------------------
\section{Related Works}
Learning with noisy labels has been a long-standing problem in machine learning, which can be traced back to the 1980s~\cite{angluin1988learning}. A detailed survey~\cite{frenay2014classification} summarized the early studies on this problem.
In recent years, relying on the development of big data~\cite{zhang2019heterogeneous}, cloud computing~\cite{chen2018label}, and network communications~\cite{zhang2019cocme}, deep learning has made many breakthroughs~\cite{krizhevsky2012imagenet,ren2015faster,long2015fully}. Many successful applications~\cite{chen2019cognitive,zhang2019edge,zhang2018emotion} often employ deep learning technology. Deep neural networks (DNN) have also been used frequently in the field of learning with noisy labels. There are four streams of research within this field, as summarized below.
%In recent years, approaches in this field have often resorted to DNNs. There are four streams of research within this field, as summarized below.

First, Sukhbaatar et al.~\cite{sukhbaatar2014training} embedded a known noise transition matrix into the loss function. This is a Bayesian method that views real labels as latent variables. Unfortunately, the exact confusion matrix is usually unknown. Later, several methods focused on estimating it~\cite{goldberger2016training,hendrycks2018using,patrini2017making,jindal2016learning}. However, accurate estimations can be difficult to obtain, especially when the number of classes is very large. Moreover, these methods are not suitable for symmetric noise labels.

Second, some approaches were aimed at sample selection to address noisy labels. Decoupling~\cite{malach2017decoupling} and co-sampling~\cite{han2018co} introduced a sample-selection mechanism for carefully training predictors. They both maintain two predictors. The former selected disagreement samples to update the predictors, whereas the latter used small-loss samples to train the predictors. However, the selection mechanism itself is not very reliable because sample-selection bias may cause accumulated errors.

Third, in the context of noise-tolerant methods, several theoretically motivated noise-robust loss functions such as ramp loss and unhinged loss have been introduced~\cite{brooks2011support,van2015learning}. Ghosh et al.~\cite{ghosh2015making,ghosh2017robust} proved and empirically demonstrated that the mean absolute error (MAE) is robust against noisy labels. However, the convergence speed of MAE is slow. Zhang et al.~\cite{zhang2018generalized} found a loss function $\mathcal{L}_q$ that unifies categorical cross entropy (CCE) and MAE to obtain a trade-off relationship between training speed and robustness. Additionally, regularization is an effective method to resist noisy labels, e.g., dropout~\cite{srivastava2014dropout}. Zhang et al.~\cite{zhang2017mixup} proposed Mixup regularization based on the idea that linear interpolations of feature vectors should lead to linear interpolations of the associated targets. Tanaka et al.~\cite{tanaka2018joint} integrated the $\mathcal{L}_p$ regularization~\cite{hu2017learning} and the confidence penalty regularization~\cite{pereyra2017regularizing} into the Kullback--Leibler divergence loss function. However, as DNNs have the characteristic of memorizing noise~\cite{zhang2016understanding}, long-time training leads to performance degradation. Therefore, for noise tolerance, it is important to consider the stop timing in training. These methods often used clean validation samples to find the best epoch, which is similar to early stopping. However, these methods are often sensitive to the validation sets, whose generalization performance is frequently affected by their quality. It is noteworthy that some robust methods based on max-margin do not need clean samples, such as the method reported in~\cite{elsayed2018large}. However, such methods cannot deal with asymmetric label noise (also called class-dependent noise or pair-flipping noise) owing to the limitation of the max-margin.

Fourth, an alternative approach attempts relabeling~\cite{lin2014re}, in which predictors and noisy labels are updated in turns. Bootstrapping~\cite{reed2014training} is a self-learning method with an assumption of consistency. However, the assumption of consistency is not always valid. Further, little attention is given to the selection of the best time to update labels. By using clean samples, the method reported in ~\cite{tanaka2018joint} could be applied to select the suitable stop timing of training and update the labels of the training set for further training. However, as mentioned above, clean samples are not always available, and without clean samples, this method might not be able to choose a suitable stop timing.

To solve this problem, in this work, we propose a weakly supervised learning method without the involvement of clean samples called LGD, which creates a few reverse samples to help estimate the timing of best generalization. LGD is based on the characteristic ~\cite{arpit2017closer} that DNNs tend to learn simple patterns before memorizing noisy patterns. LGD is thus free of models and can be applied to most of the noise-robust methods mentioned above.

\section{Problem Formulation}\label{PF}

\subsection{Polluted Dataset}
Consider the problem of $k$-way multi-class classification. Let $X \subset R^d$ be the feature space and $Y =\{0,1,\cdots,k-1\}$ be the label space. Consider $n$ training data $\{(x_i,y_i^*)\}_{i=1}^n$, where each $(x_i,y_i^*)\in(X \times Y)$ and $y_i^*$ are the true labels of $x_i$, i.e., $y_i^*$ is the oracle. The noisy label $y_i$ is corrupted with respect to the true label $y_i^*$ with the probability $\eta \in(0,1)$. For random sampling from the continuous uniform distribution $U(0,1)$, if the sample falls within $(0,\eta]$, then $y_i=y_{i\lnot}^*$, where $y_{i\lnot}^*$ denotes any defined label except $y_i^*$. If the sample falls within $(\eta,1)$, then $y_i=y_i^*$, i.e., $y_i^*$ is not flipped. Statistically, $\eta$ is the pollution ratio. The labels of the test data are true, i.e., the test labels are the oracle. We further assume that clean validation samples are not available; therefore, we do not establish a validation set.

\begin{figure}[h]
\begin{center}
\includegraphics[width = .6 \textwidth]{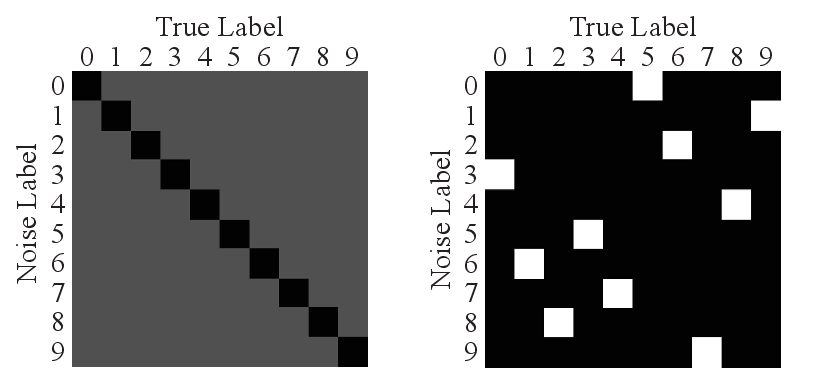}  % portion 1.0
\end{center}
    \caption{Left: visualization of the symmetric label noise model. The true labels are flipped to other labels with equal probability.
    Right: example of the asymmetric label noise model. The true labels are flipped to specific false labels according to a fixed rule.}
\label{noise_source}
\end{figure}

\subsection{Pollution Source}
Assume $y_i^*$ is corrupted to $y_{i\lnot}^*$. We consider two sources of pollution: symmetric label noise and asymmetric label noise. The symmetric label noise model obeys a uniform distribution $P(y_i = y_{i\lnot}^* | y_i^* ) = \frac{1}{k-1}$.

The asymmetric label noise model is a specific map $y_i=f(y_i^*)$, $\forall y_i \ne y_i^*$. Here, $f(\cdot)$ can be seen as fixed-rule flipping.
Taking the MNIST dataset as an example, we illustrate the two pollution sources in \Cref{noise_source}.

\section{Prerequisites of Learning with Noisy Labels}

\subsection{Regularity and Scale}\label{RS}
A previous study~\cite{zhang2016understanding} has shown that DNNs have the ability to memorize noise. If a DNN is adequately trained with noisy samples, it will learn not only the main pattern but also noise patterns. This affects the generalization of the main pattern. While the literature~\cite{arpit2017closer} has emphasized that DNNs tend to prioritize the learning of simple patterns over the memorization of noise patterns, our findings extend their claim and further reveal that this simple pattern is the regular pattern with the largest proportion of samples. Here, we should note two important facts: \emph{regularity} and \emph{scale}.

\emph{Regularity} refers to samples that are subject to a certain rule, similar to a case where the label of the character $\mathds{A}$ is 1, that of the character $\mathds{B}$ is 2, and that of the character $\mathds{C}$ is 3. However, if the label of the character $\mathds{A}$ is 2, that of the character $\mathds{B}$ is 3, and that of the character $\mathds{C}$ is 1, the sample follows another rule. We call the latter as label shifting \footnote{Strictly speaking, label shifting refers to cyclic shifting of labels. For the sake of brevity, we still call it label shifting.}. The two rules are mutually exclusive.

\emph{Scale} refers to the number of samples of the regular pattern. In gradient descent optimization, the learning sequences of different regular patterns in samples vary according to their scales.

Assume that two regular patterns are mutually exclusive. One pattern consists of large-scale regular samples (LSRS), whereas the other consists of small-scale regular samples (SSRS). Numerous experiments have confirmed that classifiers will prioritize learning the LSRS rather than the SSRS by optimizing the gradient descent. We observed that, at the early stage of training, the magnitude of gradient accumulation of the LSRS is larger than that of the SSRS. Furthermore, the directions of the two may be quite different. Consequently, the direction of the gradient sum will be biased towards LSRS such that the LSRS learning takes the higher priority. With the progression of training, both the loss and gradient magnitude of the LSRS gradually decrease. When the gradient magnitude of the LSRS decreases to a certain extent, the learning of SSRS will proceed progressively.

For a chaos pattern (e.g., symmetric noise pattern), in general, the scale can be ignored because the scattered gradient directions of the chaos samples lead to a negligible magnitude at the early stage of learning. Therefore, when the chaos pattern exists together with regular patterns, the regular patterns will always be learned first.

To demonstrate this, we first conducted a simple training experiment using the MNIST dataset with the three different patterns mentioned above: the LSRS pattern, the SSRS pattern, and the chaos pattern. Assume that their scales are $N_L$, $N_S$, and $N_C$, respectively, where $N_C=2N_L$ and $N_L=2N_S$. Further, assume that each sample's pattern is known. See \Cref{Tbl_six_datasets} for details. From \Cref{MNIST_exp} (left), we see that the LSRS pattern is learned first at the beginning of training, and its training accuracy increases rapidly, whereas the accuracy of the SSRS pattern does not increase or may even decrease. \Cref{MNIST_exp} (right) shows a 2D plot of the gradients of dimension reduction via t-SNE~\cite{maaten2008visualizing} at this stage. The direction of the gradients' sum is closer to that of the LSRS and deviates from that of the SSRS. Therefore, the learning of SSRS stagnates or even deteriorates. As the training progresses, the accuracy of the SSRS begins to increase gradually, and the speed of increase is greater than that of the chaos pattern. Although the chaos pattern's scale is the largest, its training speed is the lowest because the magnitude of its gradients is too small.

\begin{figure}[h]
\begin{center}
\includegraphics[width = .6 \textwidth]{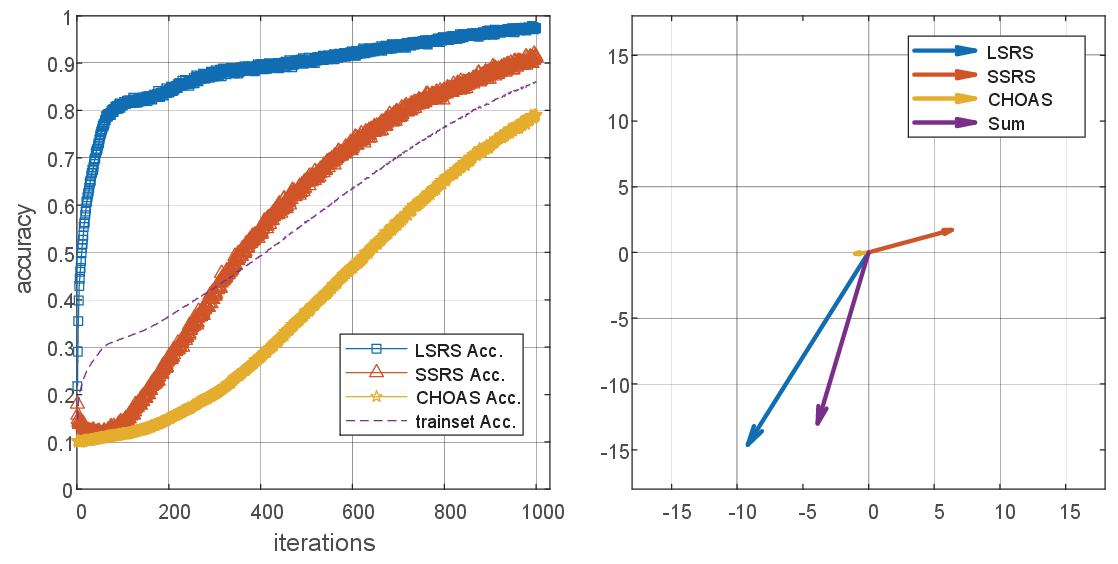}  % portion 1.0
\end{center}
    \caption{Left: training accuracy curves. Three different patterns are mixed into a training set. The LSRS is a set of clean samples; the SSRS can be regarded as samples of asymmetric noise labels. The chaos samples can be regarded as samples of symmetric noise labels.
    Right: 2D plot of different patterns' gradients of dimension reduction via t-SNE at the beginning of training.}
\label{MNIST_exp}
\end{figure}

%In summary, the regularity and scale of patterns play important roles in training based on gradient descent. For the sequence of learning, regular patterns are prioritized over chaos patterns. Moreover, LSRS patterns are prioritized over SSRS patterns.

The same trend was also observed with the other more complex datasets. We further performed experiments on six datasets, Fashion-MNIST~\cite{xiao2017fashion}, KMNIST~\cite{clanuwat2018KMNIST}, EMNIST/balanced~\cite{cohen2017EMNIST}, EMNIST/bymerge~\cite{cohen2017EMNIST}, STL10~\cite{coates2011STL10}, and SVHN~\cite{netzer2011SVHN}. The scales of the LSRS, SSRS, and Chaos patterns on the datasets are the same as those in the experimental settings of the MNIST dataset. See \Cref{Tbl_six_datasets} for details. The training results in \Cref{six_datasets} all show that the learning of LSRS will be prioritized during training. From the above experiments, we see that the regularity and scale of the patterns play important roles in training based on gradient descent. For the sequence of learning, regular patterns are prioritized over chaos patterns. Moreover, LSRS patterns are prioritized over SSRS patterns.

\begin{table} [h]
\caption{Pattern-scale settings for the datasets. The training set consists of samples from LSRS, SSRS, and Chaos patterns.}
\scriptsize
\footnotesize
%\tiny
\begin{center}
%\caption{Number distribution of $S$ before the label-shifting operation.}
\begin{tabular}{|c|c|c|c|c|c|}
\hline
%Dataset     & used      & LSRS      & SSRS      & Chaos     & class num \\
%            & samples   & Pattern   &  Pattern  &  Pattern  &  \\
%Dataset     & trainset     & LSRS samples     & SSRS samples     & Chaos samples    & class num \\
%Dataset     & trainset     & LSRS     & SSRS     & Chaos    & class num \\
Dataset     & trainset     & $N_L$     & $N_S$     & $N_C$    & class num \\
\hline \hline
MNIST           & 56k   & 16k   & 8k    & 32k   & 10\\
\hline
Fashion-MNIST   & 56k   & 16k   & 8k    & 32k   & 10\\
\hline
KMNIST          & 56k   & 16k   & 8k    & 32k   & 10\\
\hline
EMNIST/Balanced & 112k  & 32k   & 16k   & 64k   & 47\\
\hline
EMNIST/ByMerge  & 630k  & 180k  & 90k   & 360k  & 47\\
\hline
STL10           & 12.6k & 3.6k  & 1.8k  & 7.2k  & 10\\
\hline
SVHN            & 70k   & 20k   & 10k   & 40k   & 10\\
\hline
\end{tabular}
\end{center}
\label{Tbl_six_datasets}
\end{table}

\begin{figure}[h]
\begin{center}
\includegraphics[width = .6 \textwidth]{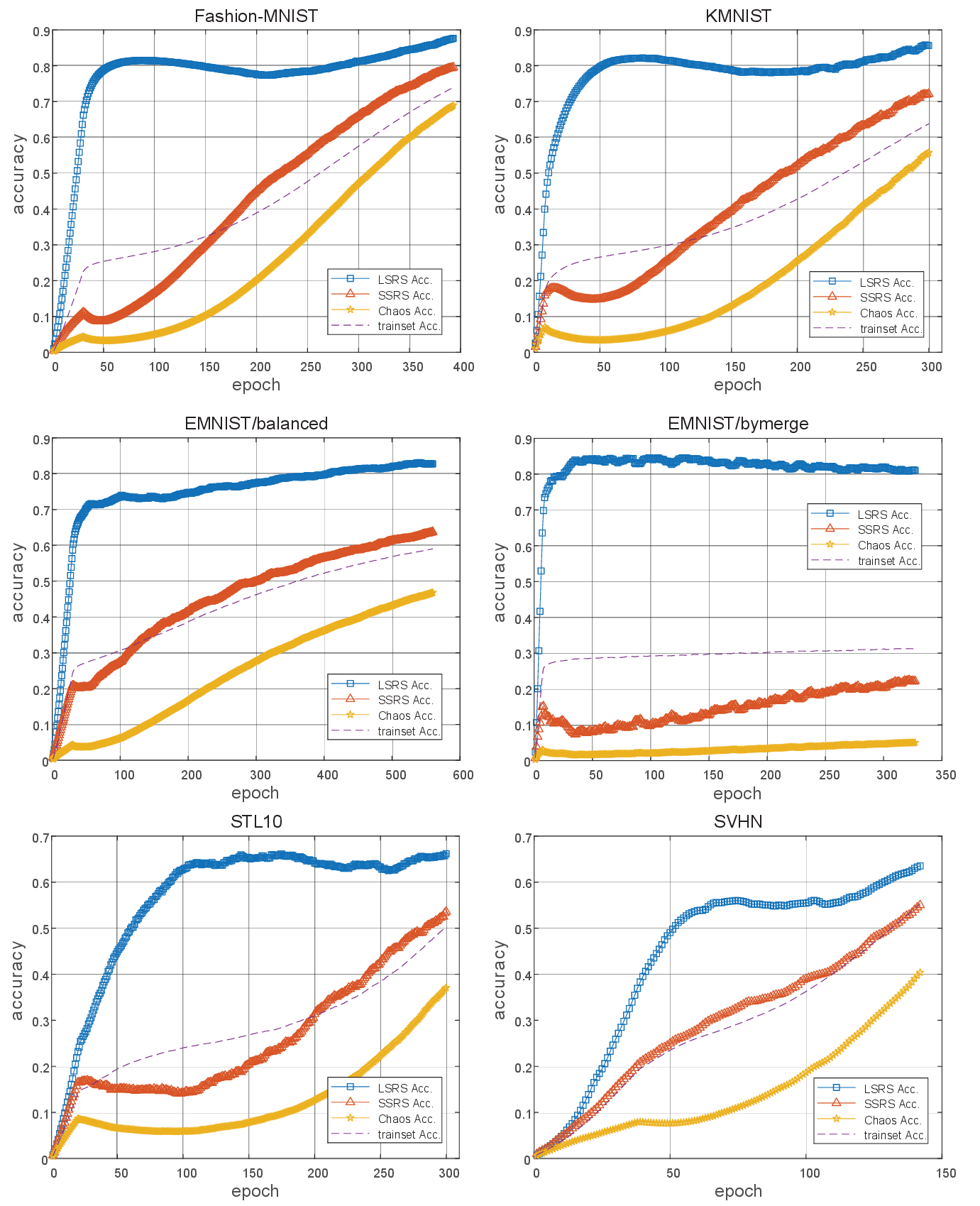}  % portion 1.0
\end{center}
    \caption{Training accuracy curves on the six datasets. The three different patterns are mixed into one training set.}
\label{six_datasets}
\end{figure}

Although the training of DNNs has the above characteristics, for the actual training set, one cannot distinguish between the main pattern's samples (clean samples) and the polluted samples. Further, no clean samples exist for validation. Therefore, one cannot determine when the main pattern is best generalized. To solve this problem, some reverse samples are prepared to obtain a reverse pattern that is mutually exclusive with the main pattern. We randomly select $\beta$-proportion samples from the training set to perform label shifting for the reverse pattern. We utilize the reverse pattern to find the best generalization timing of the main pattern. This method is introduced in~\Cref{LGD}. Next, we theoretically provide some necessary conditions of learning with noisy labels and illustrate how to choose the ratio $\beta$ of the artificially reversed samples.

\subsection{Necessary Conditions of Learning with Symmetric Label Noise}\label{NCS}

\begin{lemma}
Consider a $k$-class classification problem. Suppose that the labels of $r$ samples are all polluted by symmetric noise. The label-shifting operation is defined as $\hat{y} = MOD (y+1,k)$ \footnote {MOD is the modulo operation. For instance, $MOD(2,5)=2$ and $MOD(6,5)=1$.}, where $y$ is a polluted label and $\hat{y}$ is the label-shifting result of $y$. Then, after the $r$ labels are shifted, $\frac{r}{k-1}$ samples will attain true labels.
\label{lemma}
\end{lemma}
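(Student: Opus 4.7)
The plan is to translate the claim into a simple counting / expectation argument about the uniform distribution of symmetric noise. First I would unpack what ``polluted by symmetric noise'' means in this setting: by the model in Section~3.2, every one of the $r$ polluted labels $y$ satisfies $y \neq y^*$, and conditional on the true label $y^*$, the value $y$ is drawn uniformly from the $k-1$ remaining classes $\{0,1,\ldots,k-1\}\setminus\{y^*\}$.

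Next I would characterize exactly which polluted labels become true after shifting. Because $\hat{y} = \mathrm{MOD}(y+1,k)$ is a bijection on $\{0,1,\ldots,k-1\}$, the event $\hat{y}=y^*$ is equivalent to $y = \mathrm{MOD}(y^*-1,k)$. Since $k\geq 2$, we have $\mathrm{MOD}(y^*-1,k)\neq y^*$, so this specific value lies in the support $\{0,\ldots,k-1\}\setminus\{y^*\}$ of the polluted label. Hence the probability that a single polluted sample obtains its true label after shifting is exactly $\tfrac{1}{k-1}$.

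Finally, I would apply linearity of expectation across the $r$ independent polluted samples: the expected number of samples whose shifted label equals the true label is $r \cdot \tfrac{1}{k-1} = \tfrac{r}{k-1}$, which is the stated conclusion (interpreted in the statistical sense consistent with the pollution model defined earlier).

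The only subtle step is making explicit that the lemma is a statement about the expected count under the symmetric noise distribution rather than a deterministic equality (the actual realized count is an integer-valued random variable). I do not anticipate any real obstacle beyond flagging this interpretation; the argument reduces to identifying the single ``good'' pre-image of $y^*$ under the shift and invoking the uniform distribution over the $k-1$ wrong labels.
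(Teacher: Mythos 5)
Your proof is correct, and it takes the same basic counting-under-uniformity route as the paper, but with the conditioning reversed in a way that is worth noting. The paper partitions the $r$ polluted samples by their \emph{observed} (polluted) label $j$ and asserts that within each such group the true labels are spread uniformly over the remaining $k-1$ classes, so that after the shift to $j+1$ a $\tfrac{1}{k-1}$ fraction of each group, hence $\tfrac{r_j}{k-1}$ samples, recover their true label; summing over $j$ gives $\tfrac{r}{k-1}$. You instead condition on the \emph{true} label $y^*$, observe that the shift is a bijection so $\hat y = y^*$ exactly when $y = \mathrm{MOD}(y^*-1,k)$, a single admissible value among the $k-1$ wrong labels, and then apply linearity of expectation. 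Your direction is slightly cleaner: the forward conditional $P(y \mid y^*)$ uniform over the $k-1$ wrong classes is precisely what the noise model in Section~3.2 specifies, whereas the paper's reversed conditional (true labels uniform given the polluted label) is a Bayes inversion that strictly holds only under balanced class priors, even though the final sum comes out the same. You also make explicit what the paper leaves implicit, namely that ``$\tfrac{r}{k-1}$ samples attain true labels'' is an expected (statistical) count rather than a deterministic one; flagging that interpretation is the right call and does not weaken the use of the lemma in Theorem~1, which is itself an expected-count bookkeeping argument.
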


\begin{proof}
Assume the sample set $P$, the labels of which are all polluted by symmetric noise.
The sample subset with the label $y=j$ is $P_j$, the number of samples of which is $r_j$.
Recall that symmetric noise follows a uniform distribution.
The labels of $P_j$ are distributed along $\{0,1,\cdots,j-1,j+1,\cdots,k-1\}$ with equal probabilities $\frac{1}{k-1}$.
After the labels of $P_j$ are shifted, $\frac{r_j}{k-1}$ samples will attain true labels.

The label shifting of samples with other labels leads to similar conclusions.
After all the labels of $P$ are shifted, the number of samples that attain true labels is

\begin{equation}
\sum_{j=0}^{k-1}\frac{r_j}{k-1} = \frac{1}{k-1}\sum_{j=0}^{k-1}r_j = \frac{r}{k-1}.
\end{equation}
\end{proof}

\begin{theorem}
Suppose the number of samples of set $S$ with noisy labels is $n$, the pollution source of labels is symmetric noise, the pollution ratio is $\eta$, and the number of categories is $k$.
$\beta \cdot n$ samples are randomly selected from $S$ as the reverse pattern via the label-shifting operation, where $\beta\in(0,1)$ is the rate of selection. If the samples with true labels are the main pattern that can be learned first in a noisy environment, then the pollution rate $\eta$ should satisfy $\eta < \frac{k-1}{k}$, and the selection rate $\beta$ should satisfy $\beta < \frac{1-\eta}{2-2\eta- \frac{\eta}{k-1}}$.
\end{theorem}

\begin{table} [h]
\caption{Cross-analysis of the selected and leftover samples with all patterns before label shifting.}
%\scriptsize
\footnotesize
\begin{center}
%\caption{Number distribution of $S$ before the label-shifting operation.}
\begin{tabular}{|c|c|c|}
\hline
             & Chaos Pattern       & Clean Pattern \\
\hline
Selected & $\eta\beta n$     & $(1-\eta)\beta n$ \\
\hline
Leftover & $\eta(1-\beta) n$ & $(1-\eta)(1-\beta) n$\\
\hline
Total    & $\eta n$           & $(1-\eta) n$ \\
\hline
\end{tabular}
\end{center}
\label{Tbl_SYMM_before_LS}
\end{table}

\begin{table}
\caption{Cross-analysis of the selected and leftover samples with all patterns after label shifting.}
%\scriptsize
%\footnotesize
\centering
\setlength{\tabcolsep}{3pt}{
\begin{tabular}{|c|c|c|c|}
\hline
             & Chaos Pattern                    & Regular Shifted Pattern&  Clean Pattern \\
\hline
Selected & $\eta\beta\frac{k-2}{k-1} n$   & $(1-\eta)\beta n$      & $\eta\beta\frac{1}{k-1} n$ \\
\hline
Leftover & $\eta(1-\beta) n$              &   0                      &  $(1-\eta)(1-\beta) n$ \\
\hline
%Total    & $ignoring$                           & $(1-\eta)\beta n$      &  $(1-\eta)(1-\beta) n + \eta\beta\frac{1}{k-1} n$ \\
Total    & $ignoring$                           & $(1-\eta)\beta n$      &  $(1-\eta)(1-\beta) n $ \\
         &                                      &                        &  $+ \eta\beta\frac{1}{k-1} n$ \\
\hline
\end{tabular}}
\label{Tbl_SYMM_after_LS}
\end{table}

\begin{proof}
According to the assumptions, we list the numbers of the samples of all patterns before and after label shifting in \Cref{Tbl_SYMM_before_LS} and \Cref{Tbl_SYMM_after_LS}, respectively.
Note that only the labels of the selected samples are shifted.
After the chaos labels of the selected samples are shifted, some of the labels become clean labels (\Cref{lemma}), while the other labels remain as chaos labels. The clean labels of the selected samples attain a regular shifted pattern after label shifting.
The selected samples and leftover samples are separately cross-analyzed with the chaos pattern (symmetric noise pattern), clean pattern (main pattern), and regular shifted pattern.

We need to produce as many reverse samples as possible via the label-shifting operation that are mutually exclusive with the main pattern. Thus, after label shifting, the clean labels of the selected samples should be reduced:
\begin{equation}
\begin{split}
(1-\eta)\beta n > \eta\beta\frac{1}{k-1} n\\
\Rightarrow\eta < \frac{k-1}{k}.\\
\end{split}
\end{equation}

To make the scale of the clean pattern larger than that of the regular-shifted pattern,

\begin{equation}
\begin{split}
(1-\eta)(1-\beta) n + \eta\beta\frac{1}{k-1} n > (1-\eta)\beta n\\
\Rightarrow\beta < \frac{1-\eta}{2-2\eta- \frac{\eta}{k-1}}.\\
\end{split}
\end{equation}

\end{proof}

This work considers the learnability of samples with symmetric noisy labels from the perspective of creating a reverse pattern and attains the condition $\eta < \frac{k-1}{k}$, which is exactly the same result as in~\cite{ghosh2017robust}.
For a symmetric noise source, we suppose that $\eta < \frac{k-1}{k}$ might be the most relaxed condition of learning with noisy labels.

However, $\beta < \frac{1-\eta}{2-2\eta- \frac{\eta}{k-1}}$ is only a basic condition. If the selection rate $\beta$ of the reverse samples is close to the upper bound $\frac{1-\eta}{2-2\eta- \frac{\eta}{k-1}}$, it is actually very difficult to train successfully.

Empirically, we need a tighter condition of $\beta$ to ensure sufficient learning performance.

\begin{proposition}
Following Theorem 1, we further assume that the scale of the clean pattern is not less than $\delta$ times that of the reverse pattern. Then, the selection rate $\beta$ should satisfy $\beta \le \frac{1-\eta}{(1+\delta)(1-\eta)- \frac{\eta}{k-1}}$.
\end{proposition}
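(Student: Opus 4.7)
The plan is to directly adapt the accounting already set up in the proof of Theorem 1 (in particular Table 2) and to replace the strict inequality ``clean $>$ reverse'' by the stronger assumption ``clean $\geq \delta \cdot$ reverse.'' Since Table 2 already records the sizes of the clean pattern and the regular shifted (reverse) pattern after the label-shifting operation, no new combinatorial ingredient is needed; the work is purely algebraic.

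First I would read off the two relevant scales from Table 2 of Theorem 1: the clean pattern has size $(1-\eta)(1-\beta)n + \eta\beta\tfrac{1}{k-1}n$, and the reverse pattern has size $(1-\eta)\beta n$. Next I would translate the new hypothesis into the inequality
\begin{equation*}
(1-\eta)(1-\beta)n + \eta\beta\tfrac{1}{k-1}n \;\geq\; \delta\,(1-\eta)\beta n.
\end{equation*}
Cancelling $n$, expanding, and collecting all $\beta$-terms on one side yields
\begin{equation*}
(1-\eta) \;\geq\; \beta\!\left[(1+\delta)(1-\eta) - \tfrac{\eta}{k-1}\right],
\end{equation*}
from which the stated upper bound on $\beta$ follows after dividing, provided the bracketed expression is positive.

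The only nontrivial step is therefore verifying that the divisor $(1+\delta)(1-\eta) - \tfrac{\eta}{k-1}$ is strictly positive, so that the inequality may be inverted without flipping its direction. I expect this to be the main obstacle in the sense of needing explicit justification: it should follow from the hypothesis $\eta < \tfrac{k-1}{k}$ inherited from Theorem 1 (which already gives $(1-\eta) > \tfrac{\eta}{k-1}$, hence $(1+\delta)(1-\eta) - \tfrac{\eta}{k-1} \geq (1-\eta) + \delta(1-\eta) - \tfrac{\eta}{k-1} > \delta(1-\eta) > 0$), but it is worth pointing out explicitly. A brief sanity check also confirms consistency with Theorem 1: setting $\delta = 1$ recovers the earlier bound $\beta < \frac{1-\eta}{2-2\eta-\frac{\eta}{k-1}}$, which is reassuring since the hypothesis of the proposition reduces to the strict inequality used there (with the boundary now admissible because the inequality is non-strict).
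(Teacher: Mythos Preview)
Your proposal is correct and follows essentially the same approach as the paper: the paper simply writes down the inequality $(1-\eta)(1-\beta)n + \eta\beta\tfrac{1}{k-1}n \ge \delta(1-\eta)\beta n$ taken from Table~2 and solves for $\beta$, exactly as you describe. In fact you are slightly more careful than the paper, which does not explicitly justify the positivity of the divisor before inverting.
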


\begin{proof}

Similar to Theorem 1, according to the assumptions, the following inequality should be met:

\begin{equation}
(1-\eta)(1-\beta) n + \eta\beta\frac{1}{k-1} n \ge \delta(1-\eta)\beta n.
\end{equation}
Hence,
$$\beta \le \frac{1-\eta}{(1+\delta)(1-\eta)- \frac{\eta}{k-1}}.$$
\end{proof}

When $\delta=1$, this theorem becomes Theorem 1. When $\delta$ is sufficiently large, $\beta \le \frac{1}{1+\delta}$ approximately. From practical experience, the value $\delta$ is set to $\delta\ge9$. Then, $\beta \leq \frac{1}{10}$.

\subsection{Necessary Conditions of Learning with Asymmetric Label Noise}\label{NCA}

\begin{table} [b]
\caption{Cross-analysis of the selected and leftover samples with all patterns before label shifting.}
%\scriptsize
\footnotesize
\begin{center}
\begin{tabular}{|c|c|c|}
\hline
             & Polluted Pattern       & Clean Pattern \\
\hline
Selected & $\eta\beta n$     & $(1-\eta)\beta n$ \\
\hline
Leftover & $\eta(1-\beta) n$ & $(1-\eta)(1-\beta) n$\\

\hline
\end{tabular}
\end{center}
\label{Tbl_ASYM_before_LS}
\end{table}

\begin{table}
\caption{Cross-analysis of the selected and leftover samples with all patterns after label shifting.}
\scriptsize
%\footnotesize
\centering
\setlength{\tabcolsep}{3pt}{
\begin{tabular}{|c|c|c|c|c|}
\hline
%             & Polluted Pattern                          & Shifted Polluted Pattern    &  Shifted Clean Pattern      & Clean Pattern \\
             & Polluted                          & Shifted Polluted     &  Shifted Clean       & Clean  \\
             & Pattern                           &  Pattern             &  Pattern             &  Pattern \\

\hline
Selected & 0                                          & $\eta\beta n$      & $(1-\eta)\beta n$ &  almost 0  \\
\hline
Leftover & $\eta(1-\beta) n$              &     0                          &   0                           & $(1-\eta)(1-\beta) n$ \\
\hline
\end{tabular}}
\label{Tbl_ASYM_after_LS}
\end{table}

\begin{theorem}
Suppose that the number of samples of set $S$ with noisy labels is $n$, the pollution source of labels is asymmetric noise, the pollution ratio is $\eta$, and the number of categories is $k$. $\beta \cdot n$ samples are randomly selected from $S$ as the reverse pattern via the label-shifting operation, where $\beta\in(0,1)$ is the rate of selection. If the samples with true labels are the main pattern that can be learned first in a noisy environment, then the pollution rate $\eta$ should satisfy $\eta<\frac{1}{2}$, and the selection rate $\beta$ should also satisfy $\beta<\frac{1}{2}$.

\end{theorem}
\begin{proof}

According to the assumptions, we list the numbers of the samples of all patterns before and after label-shifting in \Cref{Tbl_ASYM_before_LS} and \Cref{Tbl_ASYM_after_LS}, respectively.
Recall that the asymmetric polluted samples belong to a regular pattern.
After label shifting, the original polluted samples in the selected samples will form a new regular pattern, called the shifted polluted pattern, and the original clean samples in them will become another new regular pattern, called the shifted clean pattern.
In this case, there are four patterns in the samples set $S$, which are the polluted pattern (asymmetric noise pattern), shifted polluted pattern, clean pattern (the main pattern), and shifted clean pattern.
%The selected samples and the leftover samples are cross-analyzed with the four patterns, separately.
Owing to the asymmetric pollution, all the samples are of regular patterns. Thus, the scales of all the patterns must be considered.

Note that we further assume that the selected samples will attain the true labels with an extremely small probability after label shifting.
Therefore, after the selected labels are shifted, the number of polluted samples that became clean samples is almost 0.

Recall that the scale of the main pattern is larger than those of all the other regular patterns.
Therefore, we obtain the following simultaneous inequalities:

\begin{subequations}
\begin{empheq}[left=\empheqlbrace]{align}
     (1-\eta)(1-\beta) n > \eta(1-\beta) n\\
     (1-\eta)(1-\beta) n > (1-\eta)\beta n \\
     (1-\eta)(1-\beta) n > \eta\beta n
\end{empheq}
\end{subequations}

Inequality (5a) $\Rightarrow \eta<\frac{1}{2}$.

Inequality (5b) $\Rightarrow \beta<\frac{1}{2}$.

Inequality (5c) naturally holds when $\eta<\frac{1}{2}$ and $\beta<\frac{1}{2}$ hold.
\end{proof}

Theorem 2 establishes a necessary condition that the asymmetric polluted samples should be less than half of the total samples,
whereas $\beta<\frac{1}{2}$ is only a basic condition.
If the selection rate $\beta$ is close to the upper bound $\frac{1}{2}$, it is very difficult to train successfully.
Empirically, we need a tighter upper bound of $\beta$ to ensure sufficient learning performance.

\begin{proposition}
Following Theorem 2, we further assume that the scale of the clean pattern is not less than $\delta$ times those of the reverse patterns and that $\eta<\frac{1}{2}$ holds. Then, the selection rate $\beta$ should satisfy $\beta \le \frac{1}{1+\delta}$.
\end{proposition}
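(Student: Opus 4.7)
The plan is to mimic the strategy of Proposition 1, but using the post-shift scale table for the asymmetric case (Table 4 in Theorem 2). After label shifting, four regular patterns coexist: the (leftover) clean pattern with scale $(1-\eta)(1-\beta)n$, the (leftover) polluted pattern with scale $\eta(1-\beta)n$, and two reverse patterns created by the shift operation, namely the shifted clean pattern with scale $(1-\eta)\beta n$ and the shifted polluted pattern with scale $\eta\beta n$. The proposition's hypothesis asks that the scale of the clean pattern dominate each reverse pattern by a factor of at least $\delta$, which translates to two inequalities.

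First I would write the domination condition against the shifted clean pattern,
\begin{equation*}
(1-\eta)(1-\beta)n \;\ge\; \delta\,(1-\eta)\beta n,
\end{equation*}
and cancel the common factor $(1-\eta)n$ (valid since $\eta<1/2<1$) to obtain $1-\beta \ge \delta\beta$, i.e.\ $\beta \le \frac{1}{1+\delta}$. Next I would write the analogous condition against the shifted polluted pattern,
\begin{equation*}
(1-\eta)(1-\beta)n \;\ge\; \delta\,\eta\beta n,
\end{equation*}
and solve to get $\beta \le \frac{1-\eta}{(1-\eta)+\delta\eta}$.

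The key observation is then that, under the standing assumption $\eta<\tfrac{1}{2}$, one has $1-\eta>\eta$, which makes the shifted clean pattern the larger of the two reverse patterns. Consequently its bound $\frac{1}{1+\delta}$ is the tighter of the two, and I would verify that $\frac{1}{1+\delta} \le \frac{1-\eta}{(1-\eta)+\delta\eta}$ holds throughout $\eta\in(0,\tfrac12)$ by cross-multiplying (both denominators are positive) to reduce the comparison to $(1-\eta)\ge \eta$, which is immediate. Therefore the second inequality is automatic once the first holds, and the binding constraint is $\beta \le \frac{1}{1+\delta}$, as claimed. Finally, I would note that setting $\delta=1$ recovers Theorem 2's basic bound $\beta<\tfrac12$ (modulo the weak-versus-strict distinction), giving a sanity check analogous to the symmetric case.

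The proof is largely bookkeeping once the right table is in hand, so the main ``obstacle'' is purely conceptual: recognizing that the phrase ``the reverse patterns'' (plural) requires one domination condition per shifted pattern, and then identifying which inequality binds. Because the leftover clean scale carries the factor $(1-\eta)$ rather than $(1-\eta)+\eta$, the dependence on $\eta$ cancels against the shifted clean pattern and survives against the shifted polluted pattern; this asymmetry is exactly why the clean $\leftrightarrow$ shifted-clean comparison is the binding one and yields the clean, $\eta$-free bound $\beta\le\frac{1}{1+\delta}$.
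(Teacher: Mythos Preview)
Your proposal is correct and follows essentially the same approach as the paper: you write the two domination inequalities for the clean pattern against each of the two reverse (shifted) patterns, obtain $\beta\le\frac{1}{1+\delta}$ from the shifted-clean comparison, and then observe that under $\eta<\tfrac12$ this bound already forces the shifted-polluted inequality to hold. The paper's proof is slightly terser (it simply asserts that the second inequality is implied once $\beta\le\frac{1}{1+\delta}$ and $\eta<\tfrac12$), whereas you spell out the cross-multiplication, but the argument is the same.
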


\begin{proof}
Recall that the reverse samples contain two regular patterns: shifted polluted pattern and shifted clean pattern.

Similar to Theorem 2, we have the following simultaneous inequalities:

\begin{subequations}
\begin{empheq}[left=\empheqlbrace]{align}
     (1-\eta)(1-\beta) n \ge \delta(1-\eta)\beta n\\
     (1-\eta)(1-\beta) n \ge \delta\eta\beta n
\end{empheq}
\end{subequations}

Inequality (6a) $\Rightarrow \beta \le \frac{1}{1+\delta}$.

Inequality (6b) holds when $\beta \le \frac{1}{1+\delta}$ and $\eta < \frac{1}{2}$ hold.
\end{proof}

The condition of the selection rate $\beta$ is similar to that in Proposition 1.
When $\delta=1$, this theorem becomes Theorem 2.
From practical experience, the value $\delta$ is set to $\delta \ge 9$. Then, $\beta\leq \frac{1}{10}$.

\section{Limited Gradient Descent}\label{LGD}

To solve the classification problem with noisy labels, it is necessary to know the type of noise source and estimate the pollution ratio~\cite{ramaswamy2016mixture}. If the pollution ratio $\eta$ satisfies the prerequisites of learning with noisy labels (refer to~\Cref{NCS,NCA}), the proposed LGD method can be used for learning. We randomly select $\beta$-proportion samples from the training set to perform label shifting to create the reverse pattern. We can utilize the reverse pattern to help estimate the best generalization timing of the main pattern.

\begin{algorithm} [h]
\caption{Limited Gradient Descent}
Randomly select $\beta \cdot n$ samples from the original training set $S$ to shift the labels as the subset $S_r$, and the leftover samples are referred to as the other subset $S_l$. The new training set now becomes $S' = S_r \cup S_l$.
\begin{algorithmic}
\REQUIRE $Net$, loss function, the training set $S'$, $LoR \leftarrow 0$, LGD iterations $N$
\FOR{each $i \in [1,N]$}
\STATE Train $Net$ by one step (e.g., one epoch) with SGD on $S'$\
\STATE Infer $S_l$ and $S_r$ and obtain the test accuracies $Acc_l$ and $Acc_r$, respectively.\
\IF{$\frac{Acc_l}{Acc_r} > LoR$}
\STATE $LoR \leftarrow \frac{Acc_l}{Acc_r}$
\STATE $net\_rec\leftarrow Net$\
\ENDIF
\ENDFOR
\STATE Predict the labels of the test set with $net\_rec$ to calculate test accuracy.\
\end{algorithmic}
\end{algorithm}

Algorithm 1 illustrates the LGD method. According to the characteristic that DNNs tend to prioritize the learning of the LSRS pattern (see~\Cref{RS}), the main pattern will be learned first if the scale of the main pattern is dominant. The reverse and main patterns are mutually exclusive. We can estimate the generalization performance of the main pattern by observing the training precisions of the leftover and reverse samples. The accuracy of the leftover samples is approximated to that of the main pattern. Meanwhile, the accuracy of the reverse samples is approximated to that of other regular patterns. Training should be stopped when the main pattern is generalized as much as possible and the learning of other regular patterns is suppressed. We design a \emph{leftover-over-reverse} (LoR) metric to estimate the learning performance of the main pattern. When the LoR reaches its maximum value, the main pattern might be best generalized. See Algorithm 1 for details. Because LGD is free of models, it can be applied to most DNNs and loss functions based on SGD optimization. Therefore, the model and loss function are not specified in Algorithm 1.

It is worth mentioning that LGD is different from the methods reported in~\cite{malach2017decoupling} and ~\cite{han2018co}, which only learn reliable samples generally based on confidence or loss. In fact, they are based on a relatively tight learning condition. However, the relatively tight condition could sometimes be difficult to maintain. In other words, selecting reliable samples is sometimes less reliable. Our method relies on the relatively relaxed condition that the main pattern is dominant in the number of samples, rather than on reliable samples.

\section{Experiments}

\subsection{Datasets}
We perform experiments on three datasets, CIFAR-10, CIFAR-100, and Clothing-1M.

\textbf{CIFAR-10/CIFAR-100:} The CIFAR-10 dataset consists of 60000 32$\times$32 color images in 10 classes, with 6000 images per class. There are 50000 training images and 10000 test images. The CIFAR-100 dataset also consists of 60000 32$\times$32 color images in 100 classes, with 600 images per class. There are 50000 training images and 10000 test images as well.

Although LGD does not involve validation sets, corresponding early-stopping-based approaches rely upon them. For comparison, we randomly select several samples from the original training set as a clean validation set. The remaining training samples are then injected with noise to obtain a noisy-label training set. There are two types of noise pollution: symmetric noise pollution and asymmetric noise pollution. See~\Cref{PF} for the pollution operations.
%For fairness, the noisy trainsets which LGD and the corresponding early-stopping method use are same.

\textbf{Clothing-1M:} Clothing-1M~\cite{xiao2015learning} is a large-scale real-world dataset with over one million images of clothing. The dataset is built by crawling images from several online shopping websites. The labels are generated using the surrounding texts of images provided by the sellers and therefore contain massive errors. The pollution rate of this dataset is estimated to be 38.46\%\footnote {~\cite{xiao2015learning} reported that the correct rate of the labels is 61.54\%.} by~\cite{xiao2015learning}. The type of pollution might be dominated by asymmetric noise, which often exists between some confusion pairs of categories, e.g., Knitwear and Sweater. Clothing-1M dataset thus contains the following 14 classes: T-shirt, Shirt, Knitwear, Chiffon, Sweater, Hoodie, Windbreaker, Jacket, Down Coat, Suit, Shawl, Dress, Vest, and Underwear. The Clothing-1M dataset contains one million training samples with noisy labels and 10,526 testing samples. Although the dataset also contains 47,570 clean training data and 14,313 clean validation data, we do not use them, because LGD neither relies on clean samples for validation nor requires them to aid training.

\subsection{Conditions of Learning with Noisy Labels}
We learned the polluted training sets with LGD to illustrate the effectiveness of the proposed method.

For the CIFAR-10 dataset, we first determine the learnable conditions. For symmetric noise, because the class number $k$ of the CIFAR-10 dataset is 10, according to the conclusion of $\eta<\frac{k-1}{k}$ in Theorem 1, the injected-noise proportion $\eta<0.9$. For asymmetric noise, according to the conclusion of Theorem 2, the injected-noise proportion $\eta<0.5$. In practice, we cautiously set the pollution rate $\eta$ of the symmetric noise so as not to exceed 0.8 and the pollution rate $\eta$ of the asymmetric noise so as not to exceed 0.4.
%For the CIFAR-100 dataset, we let the setting of $\eta$ be the same as the setting in the experiments on the CIFAR-10 dataset.
For the CIFAR-100 dataset, we let the setting of $\eta$ be the same as that of the experiment on the CIFAR-10 dataset.

For the Clothing-1M dataset, as noted above, the pollution ratio is 38.46\% and the class number is 14. According to Theorem 1 and Theorem 2, a pollution rate of 38.46\% satisfies the learnable conditions for either symmetric or asymmetric noise pollution. Therefore, the noisy Clothing-1M dataset is learnable.

In the LGD method, the ratio $\beta$ of the reverse samples is the only hyper-parameter.
%We search for the most suitable value of $\beta$ from 0.01 to 0.15 in increments of 0.01 in the CIFAR-10 and CIFAR-100 datasets.
We search for the most suitable value of $\beta$ from 0.01 to 0.15 in increments of 0.01 in experiments.
We found that the experimental performance of $\beta=0.05$ is the best, that is, it does not introduce too much noise and retains the smoothness of the LoR curve. In all the experiments, we fixed $\beta=0.05$.

\subsection{Assessment on Polluted CIFAR-10 Dataset}
The experiments illustrate the performance of LGD on the polluted CIFAR-10 dataset.

Because DNNs tend to prioritize the learning of the main pattern over the memorization of noise patterns, most traditional noise-robust methods require a clean validation set to determine when to stop training. However, such traditional methods may be sensitive to validation sets. In reality, making a high-quality validation set might be another problem worth exploring. The LGD method was proposed to consider the situation where clean validation sets are not involved. We embed traditional methods into the LGD framework to adapt to such a situation.

The purpose of the experiments is to compare the two methods; in other words, given a ROBUST method, we compare the performance of ROBUST with ROBUST+LGD. Note that the comparison is in the same environment, i.e., the same network structures and hyper-parameters.

All networks used PreAct ResNet-18~\cite{he2016deep,Preact} with dropout (0.3)~\cite{srivastava2014dropout}. We used CCE loss, noise-tolerance $\mathcal{L}_q$~\cite{zhang2018generalized} loss, and noise-robust Mixup~\cite{zhang2017mixup} regularization as the comparison methods. Consistent with the setting of ~\cite{zhang2018generalized}, the hyper-parameter $q$ in $\mathcal{L}_q$ loss function was set to 0.7. According to ~\cite{zhang2017mixup}, the hyper-parameter $\alpha$ in the Mixup regularization was fixed at 8. A mini-batch size of 128 was used. The learning rate was 0.1. Each reported experiment was repeated 35 times, and noisy labels were randomly generated each time. Unless otherwise specified, the parameter settings in the following CIFAR-100 experiments are unchanged.

To test the impact of the validation set on the experimental results, we take different numbers of validation sets for comparison. For fairness, the number of training samples is fixed at 45,000. The number of training sets in all experiments (also including LGD) are the same. Each time, 45,000 samples are randomly selected from the original training set. Next, in the remaining samples, the required number of samples for each experiment are randomly selected as the validation set. We set the number of validation sets to 100, 200, 500, 1000, 2000, and 5000. Then, the training samples are polluted by $\eta$-rate symmetric or asymmetric label noise. For symmetric noise pollution, we let $\eta$ take values of 0.2, 0.4, 0.6, and 0.8; for asymmetric labels, we let $\eta$ take values of 0.1, 0.2, 0.3, and 0.4. The experimental results are shown in \Cref{CIFAR10-bars}.

\begin{figure*}[h]
\begin{center}
\includegraphics[width = 0.9 \textwidth]{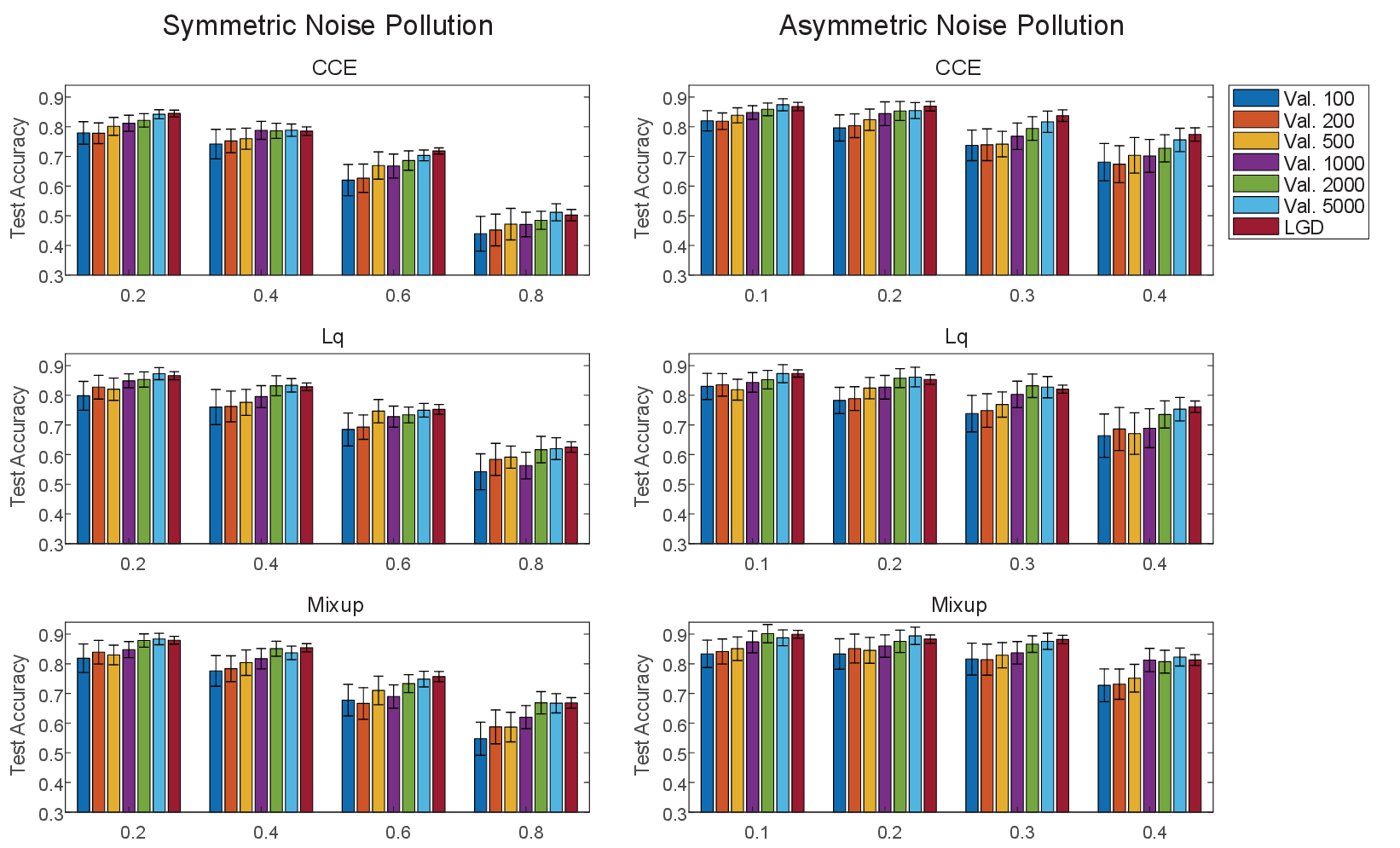}  % portion 1.0
\end{center}
    \caption{Comparisons of LGD with the corresponding traditional methods using different numbers of validation sets, which are set to 100, 200, 500, 1000, 2000, and 5000 samples. Each experiment is repeated 35 times. Left: experimental results of symmetrical noise pollution. The pollution rate $\eta$ is set to 0.2, 0.4, 0.6, and 0.8. Right: experimental results of asymmetric noise pollution. The pollution rate $\eta$ is set to 0.1, 0.2, 0.3, and 0.4.}
\label{CIFAR10-bars}
\end{figure*}

As can be seen from \Cref{CIFAR10-bars}, when the number of validation sets increases from 100 to 5000, the test accuracy shows an increasing trend overall. At the same time, as the number of validation sets increases, the variance of overall test accuracy gradually decreases. It shows that these kinds of methods are sensitive to validation sets. The number and quality of validation sets can affect the test accuracy and its variance based on the method. \Cref{CIFAR10-bars} also demonstrates the robustness of our approach, which does not require a validation set. The LGD accuracies are overall comparable to the corresponding methods that require a validation set of 5000 samples. At the same time, the variances of the test accuracies are significantly smaller than those of the conventional methods, even when compared to the methods with 5000 validation samples. Therefore, LGD avoids involving a validation set and is more robust than the traditional method.

Next, we show training details of the LGD through two experiments. We take mixup vs. LGD + mixup as an example of comparison. The pollution source is symmetric noise, and the pollution fraction $\eta$ is 0.6 in the first experiment, as shown in \Cref{LoR} (left). For the second experiment, as shown in \Cref{LoR} (right), the pollution source is asymmetric noise, and the pollution rate $\eta$ is 0.3. In \Cref{LoR} (top), the three accuracy curves correspond to the leftover samples, reverse samples, and test samples. The LoR curves are shown in \Cref{LoR} (bottom). In general, the peak of LoR is located on the left of the peak of the test accuracy curve but not far from it, which can often be regarded as the best generalization of the main pattern. The test accuracy corresponding to the LoR peak is not much different from the best accuracy of the test curve. Therefore, the generalization performance of LGD, which is indicated by the LoR peak, is very close to the actual best generalization performance.

\begin{figure}[t]
\begin{center}
\includegraphics[width = .6 \textwidth]{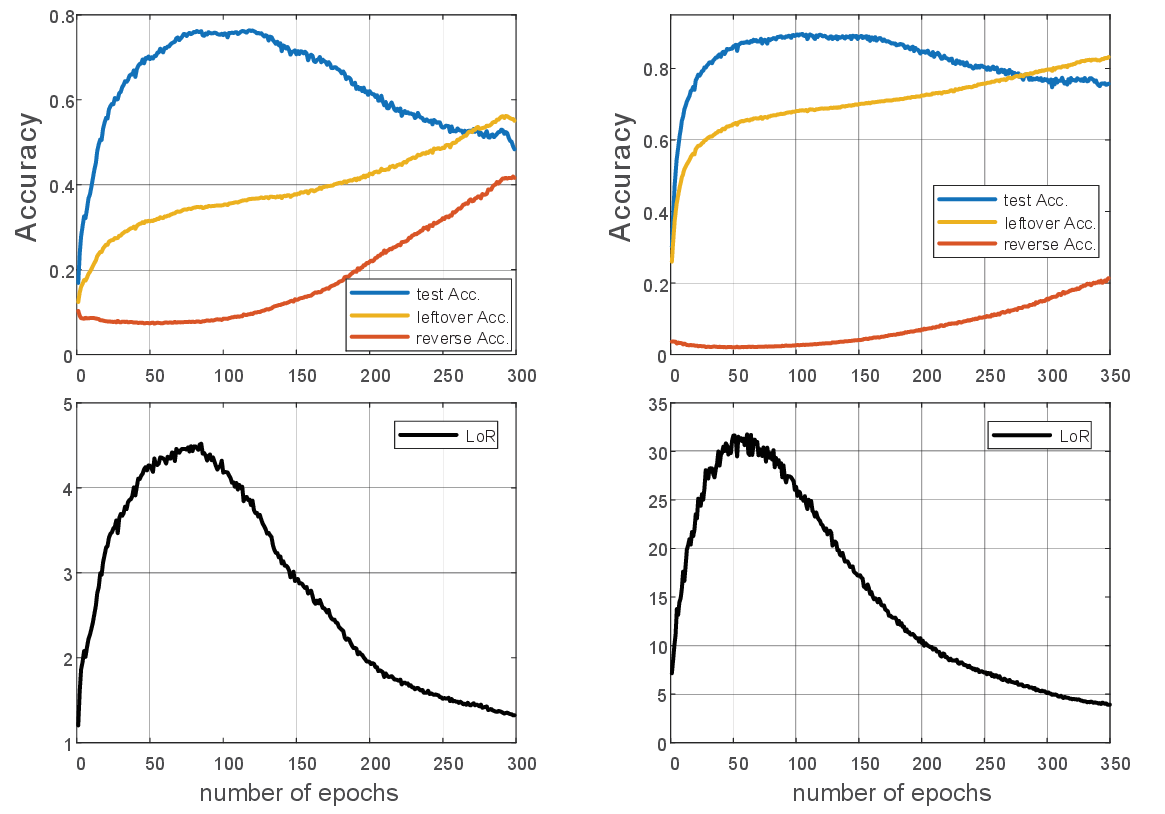}  % portion 1.0
\end{center}
    \caption{\textbf{Left:} example of learning with symmetric noisy labels ($\eta=0.6$).
    \textbf{Right:} example of learning with asymmetric noisy labels ($\eta=0.3$). }
\label{LoR}
\end{figure}

\subsection{Assessment on Polluted CIFAR-100 Dataset}
The experiments illustrate the performance of LGD on the polluted CIFAR-100 dataset.

In this experiment, we compare LGD with the noise-robust method that requires a validation set, namely ROBUST vs. ROBUST+LGD. The compared methods are the same as the previous CIFAR-10 experiment, i.e., CCE, $\mathcal{L}_q$ and Mixup. We randomly select 5000 samples in the original training set as a clean validation set, leaving 45,000 samples to inject $\eta$-proportion symmetric or asymmetric label noise as a noisy trainset. The traditional method uses the noisy training set and the clean validation set, while the LGD method only uses the noisy training set. For symmetric noise pollution, we let $\eta$ be 0.2, 0.4, 0.6, and 0.8; for asymmetric labels, we let $\eta$ be 0.1, 0.2, 0.3, and 0.4. For fairness, the noisy training sets used by a pair of compared methods are the same. The test results are evaluated on the testsets, which are listed in \Cref{Tbl_CIFAR100}.

\begin{table*}[h]
\caption{Average test accuracy ($\%$) on the CIFAR-100 dataset (35 runs). The experiment does not compare the performance of different robust methods, but rather compares LGD with the corresponding traditional methods. Therefore, there is no boldface to mark the best accuracies.}
%\scriptsize
%\footnotesize
\begin{center}
\begin{tabular}{|c|c|c|c|c|c|c|c|c|}
\hline
    &  \multicolumn{8}{|c|}{Symmetric Noise Pollution} \\
\hline
    &  \multicolumn{2}{|c|}{0.2} & \multicolumn{2}{|c|}{0.4} & \multicolumn{2}{|c|}{0.6} & \multicolumn{2}{|c|}{0.8} \\
\hline
               & validation & LGD & validation & LGD & validation & LGD & validation & LGD\\
\hline
CCE            & 55.06 & 54.99 & 46.53 & 47.89 & 34.41 & 35.33 & 17.73 & 20.26\\
\hline
$\mathcal{L}_q$             & 64.53 & 64.26 & 56.40 & 56.42 & 42.13 & 45.90 & 25.87 & 28.10\\
\hline
Mixup          & 63.98 & 62.52 & 59.90 & 59.64 & 46.85 & 45.78 & 27.90 & 29.35\\
\hline
    &  \multicolumn{8}{|c|}{Asymmetric Noise Pollution} \\
\hline
    &  \multicolumn{2}{|c|}{0.1} & \multicolumn{2}{|c|}{0.2} & \multicolumn{2}{|c|}{0.3} & \multicolumn{2}{|c|}{0.4} \\
\hline
               & validation & LGD & validation & LGD & validation & LGD & validation & LGD\\
\hline
CCE            & 62.07 & 61.67 & 59.52 & 59.08 & 56.34 & 58.78 & 50.45 & 53.11\\
\hline
$\mathcal{L}_q$             & 64.80 & 64.17 & 63.25 & 64.45 & 61.12 & 62.25 & 57.22 & 60.43\\
\hline
Mixup          & 67.70 & 68.94 & 66.38 & 65.85 & 65.35 & 65.72 & 63.96 & 65.57\\
\hline
\end{tabular}
\end{center}
\label{Tbl_CIFAR100}
\end{table*}

For symmetrical noise pollution experiments, the test accuracy of LGD method is overall equivalent to that of the corresponding traditional method at a pollution rate of 0.2 to 0.6. At the large pollution rate of 0.8, the test results of LGD are all higher than those of the corresponding traditional methods.
%Moreover, in general, the fluctuation range of each LGD test accuracy is significantly lower than the corresponding traditional method.
The similar trend has also occurred in asymmetric noise pollution experiments. The test accuracy of the LGD method is overall equivalent to that of the corresponding method at a pollution rate of 0.1 and 0.2. At the pollution rate of 0.3 and 0.4, the test results of LGD are higher than the corresponding traditional methods.
%Similarly, LGD's test accuracy fluctuates also significantly less than those of the corresponding traditional methods.

\subsection{Assessment on Clothing-1M Dataset}
Finally, we performed the experiments on the Clothing-1M dataset~\cite{xiao2015learning} to test the deployment ability of LGD in real-world applications and further compare the performance of LGD with traditional methods.

Since Clothing-1M is a large-scale dataset, we consider using the model Resnet-50 pretrained on ImageNet, which was also used in ~\cite{patrini2017making}. We still use three traditional loss functions such as CCE, $\mathcal{L}_q$, and Mixup as the comparison methods relying on clean validation sets. Then, we embedded these methods into the LGD training framework. That is, we practiced six methods on the Clothing-1M dataset: CCE, $\mathcal{L}_q$, Mixup, LGD+CCE, LGD+$\mathcal{L}_q$ and LGD+Mixup.

We only used the noisy trainset of one million samples and evaluated the scores on the testset. The traditional noise-robust methods require clean validation sets. We also show the results reported in\cite{patrini2017making, tanaka2018joint}. Tanaka\cite{tanaka2018joint} used the clean validation set to determine the stop timing of the training, then updated the label of the trainset and alternated several times. We did not used clean validation data. We only used noisy training data. The experimental results are listed in \Cref{Tbl_Clothing1M}.

\begin{table}[h]
\caption{Test accuracies (\%) on Clothing-1M dataset. \#1 and \#2 are quote from according literature. \#3, \#4 and \#5 are the results of traditional methods by our implementation. \#6, \#7 and \#8 are the results of our methods. }
%\scriptsize
\footnotesize
\begin{center}
\begin{tabular}{|c|c|c|c|c|}
\hline
\# & Method & validation set  & Test Accuracy \\
\hline \hline
1 & Forward\cite{patrini2017making} & \checkmark & 69.84 \\
\hline
2 & Tanaka\cite{tanaka2018joint} & \checkmark & 72.23 \\
\hline
3 & CCE & \checkmark & 68.87 \\
\hline
4 & Mixup & \checkmark & 71.35 \\
\hline
5 & $\mathcal{L}_q$ & \checkmark & 71.91 \\
\hline \hline
6 & LGD+CCE & - & 69.65 \\
\hline
7 & LGD+Mixup & - & 73.67 \\
\hline
8 & LGD+$\mathcal{L}_q$ & - & \textbf{74.36} \\
\hline
\end{tabular}
\end{center}
\label{Tbl_Clothing1M}
\end{table}

From \Cref{Tbl_Clothing1M}, we can see that the test accuracies of LGD+CCE, LGD+$\mathcal{L}_q$, LGD+Mixup are higher than the corresponding CCE, $\mathcal{L}_q$ and Mixup methods, respectively. The LGD+$\mathcal{L}_q$ method achieves the highest test accuracy, which is 2.45\% higher than the corresponding $\mathcal{L}_q$ method. The experimental results also show that in practical applications (real-world datasets), if a method relies on the validation set, it might be at risk. Because many methods which rely on validation sets may indeed be sensitive to validation sets, and the quality of the validation sets may affect the final test accuracy. The proposed method does not involve the use of validation sets and therefore eliminates the associated risks.

\section{Conclusion}
This paper presented the LGD method for learning with noisy labels. LGD is based on an interesting characteristic whereby DNNs tend to prioritize the learning of an LSRS pattern over the learning of an SSRS pattern or even a chaos pattern. Traditional methods often use a clean validation set to monitor the training process. In contrast, LGD does not require a clean validation set; it creates a few samples that are different from the main pattern to help estimate the learning progress of the main pattern. It works under a quite relaxed condition that the scale of the main pattern is dominant in samples. We empirically verified the effectiveness of the algorithm on various datasets. The experimental results on CIFAR-10, CIFAR-100 and Clothing-1M datasets support the practical application of LGD.

\bibliography{reference}
\bibliographystyle{unsrt}

\end{document}